\def\eqref#1{equation~\ref{#1}}
\def\1{\bm{1}}
\DeclareMathAlphabet{\mathsfit}{\encodingdefault}{\sfdefault}{m}{sl}
\SetMathAlphabet{\mathsfit}{bold}{\encodingdefault}{\sfdefault}{bx}{n}
\newtheorem{theorem}{Theorem}
\newtheorem{corollary}{Corollary}[theorem]
\newtheorem{lemma}{Lemma}
\newtheorem{proposition}[theorem]{Proposition}
\title{How noise affects the Hessian spectrum in overparameterized neural networks}
\author{Mingwei Wei \\
Department of Physics and Astronomy\\
Northwestern University\\
\texttt{m.wei@u.northwestern.edu} \\
\And
David J Schwab \\
Initiative for the Theoretical Sciences (ITS) \\
The Graduate Center \\
The City University of New York\\
\\
Facebook AI Research, New York\\
\texttt{dschwab@fb.com}
}
\begin{document}

\maketitle

\begin{abstract}
Stochastic gradient descent (SGD) forms the core optimization method for deep neural networks. While some theoretical progress has been made, it still remains unclear why SGD leads the learning dynamics in overparameterized networks to solutions that generalize well. Here we show that for overparameterized networks with a degenerate valley in their loss landscape, SGD on average decreases the trace of the Hessian of the loss. We also generalize this result to other noise structures and show that isotropic noise in the non-degenerate subspace of the Hessian decreases its determinant. In addition to explaining SGDs role in sculpting the Hessian spectrum, this opens the door to new optimization approaches that may confer better generalization performance. We test our results with experiments on toy models and deep neural networks.
\end{abstract}

\section{Introduction}
\label{sec:introduction}

Deep neural networks have achieved remarkable success in the past decade on tasks that were out of reach prior to the era of deep learning. Yet fundamental questions remain regarding the strong performance of over-parameterized models and optimization schemes that typically involve only first-order information, such as stochastic gradient descent (SGD) and its variants.
    
Regarding generalization, it has been noted that flat minima with small curvature tend to generalize better than sharp minima \citep{flatH, onlargeKeskar}. This has been argued by nonvacuous PAC-Bayes bounds \citep{nonvacuous} and Bayesian evidence \citep{bayesianSmith}. But why does SGD bias learning towards flat minima? One possible explanation was proposed by \cite{energyentropy_zhang} in terms of energy-entropy competition. \cite{three_factors} also suggests that isotropic noise in SGD helps networks escape sharp minima with large Hessian determinant. However, previous theoretical analyses assume that minima are isolated and non-singular. In contrast, \cite{empirical_sagun} finds that most of the eigenvalues of the Hessian of the loss function at a minimum are close to zero, indicating highly degenerate minima. The degeneracy is further supported by results on mode connectedness \citep{connectedGaripov, connectedDraxler}. Furthermore, it is shown that minima found from different initializations and the same optimization scheme are connected with essentially no barriers between them. \cite{theoryConnected} also shows theoretically that for a class of deep overparameterized neural nets with piecewise linear activation functions, all of the global minima are connected within a unique and potentially very large global valley.

In this paper, we prove that for models whose loss has a "minimal valley" structure, defined below, optimization via SGD decreases the trace of Hessian of the loss, by utilizing recent fluctuation-dissipation relations \citep{fluctuationSho}. Furthermore, we derive the noise covariance matrix that would result in the reduction of other potentially desired quantities such as the Hessian determinant, leading towards the design of new optimization algorithms. We present experiments on toy models and deep neural networks to confirm our predictions.

\section{Main Theorem}
\label{sec:mainTheorem}

In this section, we present our main theorem - how noise during optimization affects the Hessian of the loss function when the loss landscape locally takes the shape of a degenerate valley. More specifically, let $N$ and $n$ be the dimension of the total parameter space and non-degenerate space, respectively, and consider a general loss function $L(\bm{w})$ where $\bm{w} \in \mathbb{R}^N$ are the model parameters.\footnote{Here the non-degenerate space corresponds to the subspace of the Hessian with non-zero eigenvalues. Note that we do \emph{not} require any specific relationship between $n$ and $N$, e.g. $n\ll N$.} Since the curvature around the minima can vary, we approximate the loss function in such a valley around a degenerate minimum with a modified quadratic form $L(\bm{w}) = L^\ast + \frac{1}{2}(\bm{w}-\mathcal{P}(\bm{w}))^\mathrm{T}H(\mathcal{P}(\bm{w}))(\bm{w}-\mathcal{P}(\bm{w}))$ where $\mathcal{P}$ is a function that projects a point $w$ in parameter space to the nearest minimum, and $H$ is the Hessian, a function of the location of the projected minimum. We have the following lemma:

\begin{lemma}\label{lemma:lossFunction}
For an arbitrary point $\bm{w}$ and its neighborhood in the valley, there exists an orthogonal transformation $Q$ and a translation vector $\bm{v}$ such that the loss function in the new coordinate system $\bm{\theta} = Q\bm{w} - \bm{v}$ has the following form,
\begin{equation}\label{eq:lossfunction}
    L(\bm{\theta}) = L^\ast + \frac{1}{2}\sum_{i = 1}^n \theta_i^2\lambda_i(\theta_{n+1}, ..., \theta_{N}) \enspace,
\end{equation}
where  $\lambda_i$s are the positive eigenvalues of the loss function Hessian for the non-degenerate space and depend on the position in the degenerate space. Also, note that the gradient descent equation is invariant under this transformation.
\end{lemma}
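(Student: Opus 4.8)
The plan is to build the coordinate change out of a single eigendecomposition of the Hessian together with the orthogonality of the projection $\mathcal{P}$. First I would record the geometric content of the ``minimal valley'' structure: the set of degenerate minima forms a locally flat affine subspace $\mathcal{M}$ of dimension $N-n$, and at every minimum $\bm{m}\in\mathcal{M}$ the Hessian $H(\bm{m})$ is symmetric positive semidefinite with exactly $n$ positive eigenvalues. Its kernel coincides with the tangent space of $\mathcal{M}$ (the degenerate directions) and its range is the orthogonal complement (the non-degenerate directions). Because $\mathcal{P}(\bm{w})$ is by definition the nearest minimum, the displacement $\bm{w}-\mathcal{P}(\bm{w})$ is orthogonal to $\mathcal{M}$ and therefore lies entirely in the non-degenerate subspace; this is the fact that lets the quadratic form collapse onto $n$ coordinates.

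Next I would construct $Q$ and $\bm{v}$. Fixing the arbitrary base point, diagonalize $H$ by an orthogonal matrix $U$ whose first $n$ columns span the non-degenerate subspace and whose remaining $N-n$ columns span the degenerate subspace, and set $Q=U^\mathrm{T}$ so that $Q\bm{w}$ expresses $\bm{w}$ in this eigenframe. Choosing $\bm{v}$ to place the base minimum at the origin of the first $n$ coordinates, the transverse components $(\theta_1,\dots,\theta_n)$ become exactly the coordinates of $\bm{w}-\mathcal{P}(\bm{w})$, while $(\theta_{n+1},\dots,\theta_N)$ parameterize the location of $\mathcal{P}(\bm{w})$ along the valley. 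Substituting into the modified quadratic form and using that $H(\mathcal{P}(\bm{w}))$ is diagonal in this frame with entries $\lambda_1,\dots,\lambda_n,0,\dots,0$ yields $L=L^\ast+\tfrac12\sum_{i=1}^n\theta_i^2\lambda_i$; since those diagonal entries are read off from $H$ evaluated at $\mathcal{P}(\bm{w})$, each $\lambda_i$ is a function of $(\theta_{n+1},\dots,\theta_N)$ alone.

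For the invariance claim I would differentiate through the affine change of variables. Writing $\bm{w}=Q^\mathrm{T}(\bm{\theta}+\bm{v})$ and applying the chain rule gives $\nabla_{\bm{w}}L=Q^\mathrm{T}\nabla_{\bm{\theta}}L$; multiplying the gradient-descent update $\bm{w}_{t+1}=\bm{w}_t-\eta\nabla_{\bm{w}}L$ on the left by $Q$ and cancelling the constant $\bm{v}$ shows $\bm{\theta}_{t+1}=\bm{\theta}_t-\eta\nabla_{\bm{\theta}}L$, the same update with the same learning rate. The translation drops out because it has zero derivative, and orthogonality of $Q$ is precisely what preserves the Euclidean inner product defining the gradient.

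I expect the main obstacle to be justifying that one fixed orthogonal $Q$ works throughout the neighborhood rather than a position-dependent frame: in general the eigenvectors of $H(\mathcal{P}(\bm{w}))$ could rotate as one moves along the valley, which would force $Q$ to depend on $\bm{\theta}$ and break the clean affine form. This is exactly where the minimal-valley modeling assumption is used --- the valley is taken locally flat so that the kernel and range of $H$ are constant subspaces, and the principal transverse axes are assumed not to rotate, so that the entire variation of curvature is carried by the scalar functions $\lambda_i(\theta_{n+1},\dots,\theta_N)$ and not by any rotation of the eigenframe. I would state this assumption explicitly and note that any residual rotation of the eigenvectors is a higher-order effect, negligible within the local quadratic approximation.
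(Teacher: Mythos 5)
Your proposal is correct and follows essentially the same route as the paper: diagonalize the Hessian, assume the eigenframe is constant over the small neighborhood, apply the orthogonal rotation and a translation, and push gradient descent through the affine change of variables by the chain rule. The only substantive difference is that you posit local flatness of the set of minima outright, whereas the paper \emph{derives} the constancy of the first $n$ components of the projection $\mathcal{P}$ from the identity $H(\mathcal{P}(\bm{\theta}))J(\bm{\theta})=0$ (tangents to minimal paths lie in the Hessian kernel) combined with the constant-eigenframe assumption --- the same assumption you correctly identify as the crux.
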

A detailed, constructive proof of this lemma can be found in Appendix \ref{appendix:proofLemma}. In the rest of this section, we denote the nondegenerate and degenerate subspaces by $\Bar{\bm{\theta}} = (\theta_1, ...,\theta_n)^\mathrm{T}$ and $\hat{\bm{\theta}} = (\theta_{n+1}, ...,\theta_N)^\mathrm{T}$, respectively. Similarly, the gradients of $\Bar{\bm{\theta}}$ and $\hat{\bm{\theta}}$ are denoted by $\Bar{\nabla}$ and $\hat{\nabla}$, respectively. Notice that at a minimum where $\Bar{\bm{\theta}} = 0$, the $\lambda_i$s are the only nonzero Hessian eigenvalues.

Next we provide a quick review of the relevant fluctuation-dissipation relation formalism for stochastic gradient descent \citep{fluctuationSho}. We denote the loss for a random batch $B$ of training sample as $L^B(\bm{\theta})$. Clearly we have
\begin{equation}\label{eq:sgdMean}
    [\![\nabla L^B(\bm{\theta})]\!]_\mathrm{m.b.} = \nabla L(\bm{\theta}) \enspace , 
\end{equation}
where $[\![\bullet]\!]_\mathrm{m.b.}$ represents the average over all mini-batch realizations. If there exists a stationary state for $\Bar{\bm{\theta}}$, $p_{ss}(\Bar{\bm{\theta}})$ and the stationary-average is defined as $\langle \mathcal{O}(\bar{\bm{\theta}}) \rangle \equiv \int d\bar{\bm{\theta}} p_{ss}(\Bar{\bm{\theta}})\mathcal{O}(\bar{\bm{\theta}})$ where $\mathcal{O}(\bar{\bm{\theta}})$ is any observable of $\bar{\bm{\theta}}$, we have the following master equation
\begin{equation}\label{eq:masterEq}
    \langle \mathcal{O}(\bar{\bm{\theta}}) \rangle = \langle [\![\mathcal{O}[\bar{\bm{\theta}} - \eta\bar{\nabla}L^B(\bm{\theta})] ]\!]_\mathrm{m.b.}\rangle \enspace .
\end{equation}
We denote the two-point noise matrix as $\tilde{C}_{i,j}(\theta) \equiv [\![\partial_{\theta_i}L^B(\bm{\theta})\partial_{\theta_j} L^B(\bm{\theta})]\!]_\mathrm{m.b.}$ and the noise covariance matrix $C_{i,j}(\theta) \equiv \tilde{C}_{i,j}(\theta) - \partial_{\theta_i}L(\bm{\theta})\partial_{\theta_j} L(\bm{\theta})$. 

Now we present our main theorem:

\begin{theorem}\label{thm:mainTheorem}
When the loss can be locally approximated as in Equation \ref{eq:lossfunction}, assuming that the non-degenerate space $\Bar{\bm{\theta}}$ is in a stationary state at time $t$ and that the noise covariance matrix is aligned with the Hessian, we have
\begin{equation*}
    \langle [\![T_{\mathrm{t+1}} - T_{\mathrm{t}}  ]\!]_\mathrm{m.b.}\rangle \leq 0 \enspace ,
\end{equation*}
where $T = \sum_{i=1}^n\lambda_i$ is the trace of the Hessian and $T_{\mathrm{t}}$ represents the trace at training step $t$. Equality holds when $\nabla L^B(\bm{\theta}) = \nabla L(\bm{\theta})$ or $\hat{\nabla} T(\hat{\bm{\theta}}) = 0$.
\end{theorem}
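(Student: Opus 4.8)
The plan is to reduce the statement to a single scalar inequality by expanding the update to second order in the learning rate $\eta$, and then to evaluate the leading term using the fluctuation--dissipation relation together with the alignment hypothesis. Since $T = \sum_{i=1}^n \lambda_i$ depends only on the degenerate coordinates $\hat{\bm{\theta}}$, and an SGD step moves these coordinates by $-\eta\hat{\nabla}L^B(\bm{\theta})$, the quantity of interest is $\langle [\![ T(\hat{\bm{\theta}} - \eta\hat{\nabla}L^B) - T(\hat{\bm{\theta}}) ]\!]_{\mathrm{m.b.}}\rangle$, where the stationary average $\langle\cdot\rangle$ is taken over the fast variables $\bar{\bm{\theta}}$ with the slow variables $\hat{\bm{\theta}}$ treated as frozen. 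I would first Taylor expand $T$ about $\hat{\bm{\theta}}$ and average over mini-batches using $[\![\hat{\nabla}L^B]\!]_{\mathrm{m.b.}} = \hat{\nabla}L$ from Equation~\ref{eq:sgdMean} and $[\![\partial_a L^B \partial_b L^B]\!]_{\mathrm{m.b.}} = \tilde{C}_{ab}$, obtaining
\begin{equation*}
\langle [\![ T_{t+1} - T_t ]\!]_{\mathrm{m.b.}}\rangle = -\eta\,\langle \hat{\nabla}L\cdot\hat{\nabla}T\rangle + \frac{\eta^2}{2}\sum_{a,b}\langle \tilde{C}_{ab}\,\partial_a\partial_b T\rangle + O(\eta^3),
\end{equation*}
where $a,b$ range over the degenerate indices. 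The goal is then to show the first term is nonpositive and that it dominates the second.

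The crux is to evaluate $\langle\hat{\nabla}L\cdot\hat{\nabla}T\rangle$. From the geometry in Equation~\ref{eq:lossfunction} one reads off $\partial_a L = \tfrac12\sum_i \theta_i^2\,\partial_a\lambda_i$ and $\partial_i L = \theta_i\lambda_i$, so the degenerate-direction gradient is sourced entirely by the fluctuations of the non-degenerate coordinates. To control $\langle\theta_i^2\rangle$ I would apply the master equation (Equation~\ref{eq:masterEq}) to the observable $\mathcal{O} = \tfrac12\theta_i^2$; the $O(\eta)$ contribution cancels against the stationary value and the surviving balance yields the fluctuation--dissipation identity $\lambda_i\langle\theta_i^2\rangle = \tfrac{\eta}{2}\langle\tilde{C}_{ii}\rangle$. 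The alignment hypothesis then enters decisively: if the noise covariance is proportional to the Hessian, then $C_{ii} = \kappa\lambda_i$ in the non-degenerate block, so that to leading order in $\eta$ (where $\tilde{C}_{ii}\approx C_{ii}$) one obtains the equipartition statement $\langle\theta_i^2\rangle = \tfrac{\eta\kappa}{2} =: s$, \emph{independent of} $i$. Substituting, $\langle\partial_a L\rangle = \tfrac12\sum_i\langle\theta_i^2\rangle\,\partial_a\lambda_i = \tfrac{s}{2}\,\partial_a T$; that is, the averaged degenerate drift points along $-\hat{\nabla}T$.

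Combining these, and using that $\partial_a T$ is constant under the $\bar{\bm{\theta}}$-average, I would conclude $\langle\hat{\nabla}L\cdot\hat{\nabla}T\rangle = \tfrac{s}{2}\sum_a(\partial_a T)^2 = \tfrac{s}{2}\,|\hat{\nabla}T|^2 \ge 0$, so the leading term equals $-\tfrac{\eta s}{2}\,|\hat{\nabla}T|^2 \le 0$. The two equality cases then drop out: absence of mini-batch noise forces $\kappa=0$, hence $s=0$, while a locally flat trace gives $\hat{\nabla}T=0$. The remaining task is to dispose of the second-order term, and this is where I expect the \emph{main obstacle}. Because $s = O(\eta)$, the leading drift term is itself only $O(\eta^2)$, the same nominal order as the curvature term $\tfrac{\eta^2}{2}\sum_{a,b}\langle\tilde{C}_{ab}\,\partial_a\partial_b T\rangle$, so a naive power count does not separate them. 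The alignment hypothesis must rescue the argument a second time: proportionality to the Hessian forces the degenerate-block covariance $C_{ab}$ to vanish, leaving $\tilde{C}_{ab} = \partial_a L\,\partial_b L$, whose stationary average is $O(\eta^2)$; the curvature term is therefore $O(\eta^4)$ and genuinely subleading. Making this order-counting precise, and justifying the frozen-$\hat{\bm{\theta}}$ separation of timescales that licenses treating $\langle\cdot\rangle$ as a conditional average, is the delicate part of the proof.
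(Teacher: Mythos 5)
Your proof follows essentially the same route as the paper's: Taylor-expand the slow update, evaluate $\langle\theta_i^2\rangle$ via the master equation applied to the observable $\theta_i^2$, and use the Hessian--noise alignment to obtain the $i$-independent equipartition value $\langle\theta_i^2\rangle=\eta\kappa/2+O(\eta^2)$, which collapses the drift onto $-\hat{\nabla}T$ and yields the same leading term $-\tfrac{\eta^2\kappa}{4}\big\Vert\hat{\nabla}T\big\Vert^2\leq 0$ with the same two equality cases. The one point where you go beyond the paper is in explicitly checking that the second-order Taylor term $\tfrac{\eta^2}{2}\sum_{a,b}\langle\tilde{C}_{ab}\,\partial_a\partial_b T\rangle$ --- nominally the same order as the retained drift --- is in fact $O(\eta^4)$ because alignment forces $C_{ab}=0$ in the degenerate block; the paper silently absorbs this into the $O(\eta^2)$ remainder of Equation~\ref{eq:lt1-lt}, so your treatment of that step is the more careful one.
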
 

Theorem \ref{thm:mainTheorem} indicates that the change of the Hessian trace during SGD optimization is non-positive on average. A trivial condition for equality is optimization with gradient descent (GD) instead of its stochastic variant because the stationary state for noiseless optimization is a constant state with vanishing gradient. An alternate condition for equality is that the trace function of $\bar{\bm{\theta}}$ reaches a minimum or saddle point.

Before proving Theorem \ref{thm:mainTheorem}, we will first analyze the assumptions made.

\subsection{Evidence for the Assumptions Made}
In this subsection, we will analyze the assumptions in Theorem \ref{thm:mainTheorem} and present some experiments that support their validity.

\subsubsection{Minimal Valley}
Our theory relies on the existence of a degenerate valley in the loss function defined previously via Equation \ref{eq:lossfunction}. In addition to degeneracy, we furthermore assume that there exists a region of connected degenerate minima in the landscape of the loss function, in particular for overparameterized models. SGD or its variants then selects one of the degenerate solutions. Such degeneracy has been observed empirically where most of the eigenvalues of the loss Hessian from various models and tasks are zero or close to zero \citep{singularitySagun, empirical_sagun}. Furthermore, the connectedness is further supported by results of mode connectedness \citep{connectedGaripov, connectedDraxler}. \cite{theoryConnected} also shows theoretically that for a class of deep overparameterized neural nets with piecewise linear activation functions, all of the global minima are connected within a unique and potentially very large global valley.

Here we present additional experiments to support the existence of such \emph{minimal valleys}, i.e. basins of attraction of connected, degenerate minima that can be locally described by Equation \ref{eq:lossfunction}. These experiments will be also used to analyze and confirm our theory later. We consider  classification of CIFAR10 with label-smoothing cross entropy \citep{lsSzegedy}\footnote{The label-smoothing technique is used here to make a true minimum of the cost function exist. Regular cross entropy gives similar results.}, constant learning rate of $0.1$, momentum $0.9$, batch size $500$ and total training epochs $150$. The network architectures are Preact-ResNet18 \citep{preactresnet} and VGG16 \citep{vgg}. The training and validation loss are shown in Figure \ref{fig:plot} (inset). After $150$ training epochs, the networks fluctuates with small training loss, and the minimal training loss for both case is $\sim 0.005$.

In order to confirm the existence of minimal valleys, we would like to project each point along the training path to the corresponding closest minimum \footnote{We use minimum here but specifically we mean a point with training loss less than a threshold.} and check whether the projected path crosses any barriers. To determine whether two minima can be connected by a line segment in a minimal valley implying no barriers in between, we use line interpolation to connect these two minima and measure the training loss along the interpolating line. In practice, we call it line connectedness if we sample evenly $10$ points in the line segment between consecutive minima and the training loss is smaller than a threshold for all $10$ points. 

To project a state to a closest minimum, the model is trained from that state with the same hyperparameters except using GD instead of SGD to remove noise. The stopping criterion for these auxiliary training tasks is that the training loss is less than the minimal training loss, which is $\sim 0.005$. 

Ideally, we would project states after each iteration and check for line connectedness, but this involves large unnecessary calculations. In practice, if state B is obtained from $n$ training iterations after state A, and A and B are not line connected, we insert C which is obtained from $n/2$ training iterations after A and check for line connectedness between A and C and between C and B. This process stops when there is no extra iteration between A and B or they are line connected. Starting from each pair of consecutive epochs as A and B, we obtain the projected path recursively. The training and validation losses on these two projected paths are shown in Figure \ref{fig:plot} (main plot). The left path has $2700$ projected points and the right has $4020$. We see that except the first few epochs, the training loss along the path remains close to zero (and hence minimal), which means that there exists a least one minimal valley connected to the model state found by regular SGD. Also SGD optimization happens within such a valley after the first few epochs. Furthermore, the validation loss varies along the valley.

More experiments can be found in Appendix \ref{sm:ae}.

\begin{figure*}[t]
\vskip 0.2in
\begin{center}
\centerline{\includegraphics[width=\textwidth]{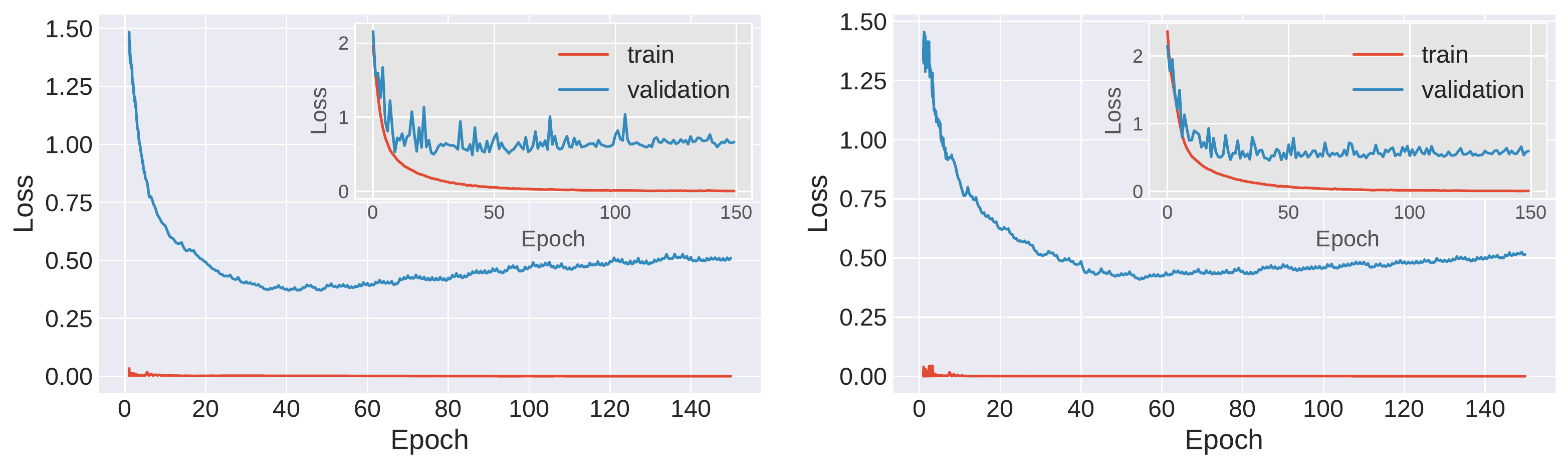}}
\caption{CIFAR10 trained on Preact-ResNet18 (left) and VGG16 (right). Main plots show training and validation loss of projected paths. Projected paths are obtained by projecting each model state during optimization to a corresponding state found by GD with small training loss. Each pair of consecutive projected states are then connected by line interpolation. The inset shows training and validation loss of training paths.}
\label{fig:plot}
\end{center}
\vskip -0.4in
\end{figure*}

\subsubsection{The Hessian-Noise Covariance Alignment}
\label{asmp:alignment}
Empirical observations have found that the noise covariance matrix is aligned with the Hessian \citep{energyentropy_zhang, anisotropic_Zhu, three_factors}. In fact, it was shown in \cite{trainlonger} that 
\begin{equation*}
    C_{i,j} = \frac{1}{S}\left(1-\frac{S}{M}\right)\frac{1}{M}\sum_{k=1}^M \partial_{\theta_i} l(x_k,\bm{\theta})\partial_{\theta_j} l(x_k,\bm{\theta}) \enspace , 
\end{equation*}
where $S$ is the mini-batch size and $M$ is the total number of training samples. Considering negative log-likelihood loss functions, which is also the loss function in our experiments for classification, the loss $L$ can be written as $L = -\frac{1}{M}\sum_{i=k}^M \log{f(x_k, \bm{\theta})}$ where $f$ is the output of the network and $l(x_k, \bm{\theta}) = -\log{f(x_k, \bm{\theta})}$. Notice that 
\begin{equation}\label{eq:h=c}
\begin{aligned}
    \partial_{\theta_i}\partial_{\theta_j} L(\bm{\theta})&= \frac{1}{M}\sum_{k=1}^M \partial_{\theta_i} l(x_k,\bm{\theta})\partial_{\theta_j} l(x_k,\bm{\theta}) - \frac{1}{M}\sum_{k=1}^M\frac{1}{ f(x_k,\bm{\theta})}\partial_{\theta_i}\partial_{\theta_j} f(x_k,\bm{\theta}) \enspace,
\end{aligned}
\end{equation}
where the left-hand side is the positive Hessian matrix for $i, j \leq n$ and the first term on the right-hand side is the positive matrix proportional to the noise covariance matrix. Empirically, negative eigenvalues are exponentially small in magnitude compared to major positive ones when a model is close to a minimum \citep{empirical_sagun} and negative eigenvalues can only come from the contribution of the second term on the right-hand side of equation (\ref{eq:h=c}). Therefore unless the second term is extremely asymmetrical, it is safe to assume that its contribution is small compared to the first term. We will return to this in Section \ref{sec:negativeEig} and show that eigenvalues with small magnitude arise from higher-order curvature in the degenerate space even when the Hessian with respect to the bottom of the valley is characterized by the positive $\lambda_i$s for $i = 1, ..., n$ or zero directions. Ignoring the second term on the right hand side, Equation \ref{eq:h=c} becomes
\begin{equation}\label{eq:c=l}
    C_{i,j} = \frac{1}{S}\left(1-\frac{S}{M}\right) H_{i,j} \enspace .
\end{equation}

\subsubsection{Timescale Separation}
The final assumption is that the dynamics relaxes to a stationary state in the non-degenerate space $\Bar{\bm{\theta}}$ but not in the degenerate space. This assumption is because there is a timescale separation between the dynamics of $\Bar{\bm{\theta}}$, which relax quickly and the dynamics of $\hat{\bm{\theta}}$, which evolve much more slowly as the minimal valley is traversed.

To be more precise, as we will show in Section \ref{sec:negativeEig}, $\hat{\bm{\theta}}$ are the dimensions associated with exponentially small Hessian eigenvalues when a model state is close to but not at a minimum, which is also observed empirically in \cite{empirical_sagun}, while $\Bar{\bm{\theta}}$ correspond to the leading order eigenvalues as we discussed previously. Considering a simplified optimization problem where the leading order directions undergo independent Ornstein–-Uhlenbeck processes as for quadratic loss with noise, the decay to a stationary state is exponential with decay rate proportional the corresponding eigenvalues. Since $\Bar{\bm{\theta}}$ correspond to the leading order eigenvalues, their relaxation time is exponentially shorter than the evolution in the degenerate space. Therefore it is reasonable to assume a stationary state in the non-degenerate space $\Bar{\bm{\theta}}$ when studying the degenerate $\hat{\bm{\theta}}$ dynamics.

\subsection{Proof of Theorem \ref{thm:mainTheorem}}
\begin{proof} The stochastic gradient descent update for $\hat{\bm{\theta}}$ is
$\hat{\bm{\theta}}_{t+1} = \hat{\bm{\theta}}_{t} - \eta\hat{\nabla}L^B(\bm{\theta}_{t})$. If we consider the average evolution of the eigenvalues $\lambda_i$ for $i = 1, ..., n$ from step $\mathrm{t}$ to $\mathrm{t+1}$, we have
\begin{equation}\label{eq:lt1-lt}
    \begin{aligned}
        \langle [\![\lambda_{i,\mathrm{t+1}} - \lambda_{i,\mathrm{t}}  ]\!]_\mathrm{m.b.}\rangle &= \langle [\![\lambda_{i}(\hat{\bm{\theta}}_{t+1}) - \lambda_{i}(\hat{\bm{\theta}}_{t})]\!]_\mathrm{m.b.}\rangle \\
        &=\langle [\![\lambda_{i}(\hat{\bm{\theta}}_{t} - \eta\hat{\nabla}L^B(\bm{\theta}_{t})) - \lambda_{i}(\hat{\bm{\theta}}_{t})]\!]_\mathrm{m.b.}\rangle \\
        &=-\eta\langle[\![\hat{\nabla}\lambda_{i}(\hat{\bm{\theta}}_{t})^\mathrm{T} \hat{\nabla}L^B(\bm{\theta}_{t}))]\!]_\mathrm{m.b.}\rangle + O(\eta^2) \\
        &=-\eta\langle\hat{\nabla}\lambda_{i}(\hat{\bm{\theta}}_{t})^\mathrm{T} \hat{\nabla}L(\bm{\theta}_{t}))\rangle + O(\eta^2) \enspace , 
    \end{aligned}
\end{equation}
where we performed a Taylor expansion from the second to the third equality and the fourth equality holds from equation (\ref{eq:sgdMean}).

Notice that $\hat{\nabla}L(\bm{\theta}) = \frac{1}{2}\sum_{i=1}^n \theta_i^2 \hat{\nabla}\lambda_i(\hat{\bm{\theta}})$. By keeping the leading order term, we have 
\begin{equation}\label{eq:lt1-lt2}
\begin{aligned}
    \langle [\![\lambda_{i,\mathrm{t+1}} - \lambda_{i,\mathrm{t}}  ]\!]_\mathrm{m.b.}\rangle&\simeq  -\frac{\eta}{2}\sum_{j=1}^n\langle\theta_j^2\rangle\hat{\nabla}\lambda_{i}(\hat{\bm{\theta}}_{t})^\mathrm{T}\hat{\nabla}\lambda_j(\hat{\bm{\theta}}_{t})\enspace .   
\end{aligned}
\end{equation}

Considering the observables $\mathcal{O}(\bar{\bm{\theta}}) \equiv \theta_i^2 $ for $i = 1, ..., n$, the master equation (\ref{eq:masterEq}) becomes
\begin{equation*}
    \langle\theta_i\partial_{\theta_i}L\rangle = \frac{\eta}{2}\langle\tilde{C}_{i,i}\rangle \enspace.
\end{equation*}
Notice that $\partial_{\theta_i}L = \theta_i\lambda_i(\hat{\bm{\theta}})$, we thus have
 \begin{equation}\label{eq:t=c/l}
    \langle\theta_i^2\rangle = \frac{\eta\langle\tilde{C}_{i,i}\rangle}{2\lambda_i(\hat{\bm{\theta}})} \enspace .
\end{equation}
By the definition of the two-point noise matrix $\tilde{C}$ and the noise covariance matrix $C$, we also have
\begin{equation}\label{eq:c=c+r}
\begin{aligned}
    \tilde{C}_{i,i} &= C_{i,i} + \partial_{\theta_i}L(\bm{\theta})\partial_{\theta_i}L(\bm{\theta}) \\
    &=C_{i,i} + \theta_i^2\lambda_i^2(\hat{\bm{\theta}})
\end{aligned}
\end{equation}

Based on the assumption that the noise covariance matrix is aligned with the Hessian and scales as $\frac{1}{S}\left(1-\frac{S}{M}\right)$ where $S$ is the mini-batch size and $M$ is the total number of training samples \citep{trainlonger}\footnote{Refer to Assumption \ref{asmp:alignment} for details.}, we have
\begin{equation}\label{eq:c=l}
    C_{i,i} = \frac{1}{S}\left(1-\frac{S}{M}\right)\lambda_i(\hat{\bm{\theta}}) \enspace .
\end{equation}
Together with Eq. \ref{eq:t=c/l}, \ref{eq:c=c+r} and \ref{eq:c=l}, we have for $i\leq n$,
\begin{equation}\label{eq:theta2}
\begin{aligned}
    \langle\theta_i^2\rangle &= \frac{\eta}{S(2-\eta\lambda_i)}\left(1-\frac{S}{M}\right) \\
    & = \frac{\eta}{2S}\left(1-\frac{S}{M}\right) + O(\eta^2)
\end{aligned}
\end{equation}
Now Eq. \ref{eq:lt1-lt2} becomes
\begin{equation*}
    \begin{aligned}
        \langle [\![\lambda_{i,\mathrm{t+1}} - \lambda_{i,\mathrm{t}}  ]\!]_\mathrm{m.b.}\rangle=-\frac{\eta^2}{4S}\left(1-\frac{S}{M}\right)\sum_{j=1}^n\hat{\nabla}\lambda_{i}(\hat{\bm{\theta}}_{t})^\mathrm{T}\hat{\nabla}\lambda_j(\hat{\bm{\theta}}_{t}) + O(\eta^3)
    \end{aligned}
\end{equation*}

Next we consider the evolution of the trace of the Hessian $T = \sum_{i=1}^n\lambda_i$. We have
\begin{equation*}
    \begin{aligned}
        \langle [\![T_{\mathrm{t+1}} - T_{\mathrm{t}}  ]\!]_\mathrm{m.b.}\rangle &= -\frac{\eta^2}{4S}\left(1-\frac{S}{M}\right)\sum_{i,j=1}^n\hat{\nabla}\lambda_{i}(\hat{\bm{\theta}}_{t})^\mathrm{T}\hat{\nabla}\lambda_j(\hat{\bm{\theta}}_{t})+O(\eta^3)\\
        &\approx-\frac{\eta^2}{4S}\left(1-\frac{S}{M}\right)\Big\Vert\sum_{i=1}^n\hat{\nabla}\lambda_i(\hat{\bm{\theta}})\Big\Vert^2\\
        &\leq 0 \enspace .
    \end{aligned}
\end{equation*}

Thus we have shown that the trace of Hessian decreases on average during SGD optimization. Notice that equality holds when either one of the following conditions is satisfied: first, if $S=M$ so that SGD becomes full-batch GD, and second that $\sum_{i=1}^n\hat{\nabla}\lambda_{i}(\hat{\bm{\theta}}) = 0$, which implies $\hat{\nabla} T(\hat{\bm{\theta}}) = 0$.

\end{proof}

\section{Optimization Dynamics with Noise beyond SGD}
\label{sec:optNoise}

In Section \ref{sec:mainTheorem}, we concluded that, to leading order in the learning rate $\eta$, SGD dynamics reduces the Hessian trace. This result originates from the noise introduced by SGD with covariance given by (\ref{eq:c=l}). Interestingly,
other forms of noise can be introduced into gradient descent in order to decrease other desired quantities instead of the trace. Here we present a theorem that relates the expected decrease of certain functions of the Hessian spectrum to a corresponding noise covariance. And $\langle \bullet \rangle$ represents an average over the stationary state and $[\![\bullet]\!]$ averages the quantity over the corresponding noise.

\begin{theorem}\label{thm:2}
In a minimal valley with loss approximation in Equation \ref{eq:lossfunction}, assuming that there exists a stationary state in the non-degenerate space $\Bar{\bm{\theta}}$, for any quantity $f(\bm{\lambda})$ satisfying $\frac{\partial f(\bm{\lambda})}{\partial \lambda_i} \geq 0$ for $i = 1, ..., n$, where $\bm{\lambda} \equiv (\lambda_1, ...,\lambda_n)^\mathrm{T}$, if the loss is optimized with gradient descent along with external noise with diagonal covariance matrix
\begin{equation}\label{eq:themCov}
    C_{i,i} = \lambda_i\frac{\partial f(\bm{\lambda})}{\partial \lambda_i} \enspace, 
\end{equation}
we have
\begin{equation*}
    \langle [\![f_{\mathrm{t+1}} - f_{\mathrm{t}}  ]\!]\rangle \leq 0 \enspace ,
\end{equation*}
\end{theorem}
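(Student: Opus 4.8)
The plan is to run the argument for Theorem~\ref{thm:mainTheorem} essentially verbatim, but to track the general quantity $f(\bm{\lambda})$ in place of the trace $T=\sum_i\lambda_i$, exploiting the fact that the prescribed covariance (\ref{eq:themCov}) is engineered precisely so that the second-order drift collapses into a perfect square. First I would write the degenerate-space update as gradient descent with zero-mean injected noise, so that its mean over the noise is $-\eta\hat{\nabla}L(\bm{\theta}_t)$, and expand each eigenvalue to first order in $\eta$ exactly as in (\ref{eq:lt1-lt}):
\begin{equation*}
\langle[\![\lambda_{i,\mathrm{t+1}}-\lambda_{i,\mathrm{t}}]\!]\rangle=-\eta\langle\hat{\nabla}\lambda_i(\hat{\bm{\theta}}_t)^\mathrm{T}\hat{\nabla}L(\bm{\theta}_t)\rangle+O(\eta^2).
\end{equation*}
Substituting $\hat{\nabla}L=\tfrac12\sum_j\theta_j^2\hat{\nabla}\lambda_j$ and keeping the leading term reproduces (\ref{eq:lt1-lt2}), namely $\langle[\![\lambda_{i,\mathrm{t+1}}-\lambda_{i,\mathrm{t}}]\!]\rangle\simeq-\tfrac{\eta}{2}\sum_j\langle\theta_j^2\rangle\,\hat{\nabla}\lambda_i^\mathrm{T}\hat{\nabla}\lambda_j$.

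The only place the noise model enters is through the stationary second moments $\langle\theta_j^2\rangle$. Applying the master equation (\ref{eq:masterEq}) to $\mathcal{O}=\theta_i^2$ and using $\partial_{\theta_i}L=\theta_i\lambda_i$ gives (\ref{eq:t=c/l}), i.e. $\langle\theta_i^2\rangle=\eta\langle\tilde{C}_{i,i}\rangle/(2\lambda_i)$, and since $\tilde{C}_{i,i}=C_{i,i}+\theta_i^2\lambda_i^2=C_{i,i}+O(\eta)$ the covariance and two-point matrices agree at leading order. Here is where I insert the prescribed covariance (\ref{eq:themCov}): plugging $C_{i,i}=\lambda_i\,\partial f/\partial\lambda_i$ yields
\begin{equation*}
\langle\theta_i^2\rangle=\frac{\eta}{2\lambda_i}\,\lambda_i\frac{\partial f(\bm{\lambda})}{\partial\lambda_i}+O(\eta^2)=\frac{\eta}{2}\frac{\partial f(\bm{\lambda})}{\partial\lambda_i}+O(\eta^2).
\end{equation*}
The hypothesis $\partial f/\partial\lambda_i\ge 0$ is exactly what guarantees that (\ref{eq:themCov}) is a legitimate (positive semidefinite) covariance and that $\langle\theta_i^2\rangle\ge 0$, as a variance must be.

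Finally I would assemble the increment of $f$ by the chain rule, $\langle[\![f_{\mathrm{t+1}}-f_{\mathrm{t}}]\!]\rangle=\sum_i\tfrac{\partial f}{\partial\lambda_i}\langle[\![\lambda_{i,\mathrm{t+1}}-\lambda_{i,\mathrm{t}}]\!]\rangle+O(\eta^3)$, and substitute the two displays above:
\begin{equation*}
\langle[\![f_{\mathrm{t+1}}-f_{\mathrm{t}}]\!]\rangle=-\frac{\eta^2}{4}\sum_{i,j=1}^n\frac{\partial f}{\partial\lambda_i}\frac{\partial f}{\partial\lambda_j}\,\hat{\nabla}\lambda_i^\mathrm{T}\hat{\nabla}\lambda_j+O(\eta^3)=-\frac{\eta^2}{4}\Big\Vert\sum_{i=1}^n\frac{\partial f}{\partial\lambda_i}\hat{\nabla}\lambda_i\Big\Vert^2+O(\eta^3)\le 0,
\end{equation*}
with equality when $\sum_i(\partial f/\partial\lambda_i)\hat{\nabla}\lambda_i=0$. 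The special case $f=T$, for which $\partial f/\partial\lambda_i=1$ and $C_{i,i}\propto\lambda_i$, recovers Theorem~\ref{thm:mainTheorem}.

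I expect the main difficulty to be bookkeeping the orders in $\eta$ consistently rather than any conceptual hurdle: one must justify pulling $\partial f/\partial\lambda_i$ out of the stationary average (treating it as constant to leading order in the slowly varying $\hat{\bm{\theta}}$), verify that the Taylor remainder of the nonlinear $f$ is genuinely $O(\eta^3)$ once $\lambda_{i,\mathrm{t+1}}-\lambda_{i,\mathrm{t}}=O(\eta^2)$, and confirm that replacing $\tilde{C}_{i,i}$ by $C_{i,i}$ costs only higher order — all mirroring approximations already made for Theorem~\ref{thm:mainTheorem}. The one genuinely new ingredient to flag is that $f$ depends on $\bm{\theta}$ only through $\bm{\lambda}$, so that its increment is fully captured by the chain rule in the $\lambda_i$; this relies on the valley structure of (\ref{eq:lossfunction}).
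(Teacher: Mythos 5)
Your proposal is correct and follows essentially the same route as the paper's proof: the same first-order expansion of the eigenvalue increments, the same use of the master equation on $\mathcal{O}=\theta_i^2$ to get $\langle\theta_i^2\rangle=\frac{\eta}{2}\frac{\partial f}{\partial\lambda_i}+O(\eta^2)$ under the prescribed covariance, and the same collapse of the double sum into $-\frac{\eta^2}{4}\big\Vert\sum_i\frac{\partial f}{\partial\lambda_i}\hat{\nabla}\lambda_i\big\Vert^2\leq 0$. Your added remarks --- that $\partial f/\partial\lambda_i\geq 0$ is what makes (\ref{eq:themCov}) a legitimate covariance, and that the chain-rule step relies on $f$ depending on $\hat{\bm{\theta}}$ only through $\bm{\lambda}$ --- are consistent with, and slightly more explicit than, the paper's presentation.
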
{}

where $f_{\mathrm{t}}$ represents $f(\bm{\lambda})$ evaluated at training step $t$. Equality holds when $\hat{\nabla} f(\bm{\lambda}(\hat{\bm{\theta}})) = 0$.

\begin{proof}
Similar to the proof of Theorem \ref{thm:mainTheorem}, we have
\begin{equation}\label{eq:fEvol}
\begin{aligned}
    \langle [\![f_{\mathrm{t+1}} - f_{\mathrm{t}} ]\!] \rangle&=  -\frac{\eta}{2}\sum_{i=1}^n\frac{\partial f}{\partial \lambda_i}\sum_{j=1}^n\langle\theta_j^2\rangle\hat{\nabla}\lambda_{i}^\mathrm{T}\hat{\nabla}\lambda_j + O(\eta^2)\enspace .   
\end{aligned}
\end{equation}
With the imposed noise with covariance matrix (\ref{eq:themCov}), the master equation (\ref{eq:masterEq}) and the relation between noise covariance matrix and two-point noise matrix in equation (\ref{eq:c=c+r}), we have
\begin{equation}
\begin{aligned}
    \langle\theta_i^2\rangle &= \frac{\eta}{(2-\eta\lambda_i)}\frac{\partial f}{\partial \lambda_i} \\
    & = \frac{\eta}{2}\frac{\partial f}{\partial \lambda_i} + O(\eta^2) \enspace .
\end{aligned}
\end{equation}
Plugging in Equation \ref{eq:fEvol} we have
\begin{equation}
\begin{aligned}
    \langle [\![f_{\mathrm{t+1}} - f_{\mathrm{t}} ]\!] \rangle&=  -\frac{\eta^2}{4}\Big\Vert\sum_{i=1}^n\frac{\partial f}{\partial \lambda_i}\hat{\nabla}\lambda_{i}(\hat{\bm{\theta}})\Big\Vert^2 +O(\eta^3)\\
     &\approx -\frac{\eta^2}{4}\Big\Vert\sum_{i=1}^n\hat{\nabla} f(\bm{\lambda}(\hat{\bm{\theta}}))\Big\Vert^2\\
     & \leq 0     \enspace .   
\end{aligned}
\end{equation}

Thus we have shown that $f$ decreases on average during optimization. Equality holds when $\hat{\nabla} f(\bm{\lambda}(\hat{\bm{\theta}})) = 0$. Notice that Theorem 1 is a special case with $f(\bm{\lambda}) = \sum_{i=1}^{n}\lambda_i$.
\end{proof}

Furthermore, we have the following corollary:
\begin{corollary}[Determinant-Decreasing Noise]\label{cl:detDecrease}
With the same assumptions in Theorem \ref{thm:2}, let $f(\bm{\lambda}) = \log{\prod_{i=1}^{n}\lambda_i}$ and $C_{i,i} = C$ where $C$ is a arbitrary positive constant, we have 
\begin{equation*}
    \langle [\![\mathrm{Det}_{\mathrm{t+1}} - \mathrm{Det}_{\mathrm{t}}  ]\!]\rangle \leq 0 \enspace ,
\end{equation*}
where $\mathrm{Det}\equiv \prod_{i=1}^{n}\lambda_i$ is the non-degenerate determinant.
\end{corollary}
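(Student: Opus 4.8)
The plan is to obtain the corollary as a direct specialization of Theorem \ref{thm:2}, followed by a short transfer from the decrease of $\log\mathrm{Det}$ to the decrease of $\mathrm{Det}$ itself. First I would set $f(\bm{\lambda}) = \log\prod_{i=1}^n\lambda_i = \sum_{i=1}^n\log\lambda_i$ and verify the monotonicity hypothesis of Theorem \ref{thm:2}: since the $\lambda_i$ are the positive Hessian eigenvalues of the non-degenerate space, $\partial f/\partial\lambda_i = 1/\lambda_i > 0$ for every $i$, so the theorem applies to this $f$.

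Next I would compute the covariance that Theorem \ref{thm:2} attaches to this choice, namely $C_{i,i} = \lambda_i\,\partial f/\partial\lambda_i = \lambda_i\cdot(1/\lambda_i) = 1$. The key structural observation is that the prescribed covariance is \emph{position independent}, so that a constant, isotropic noise in the non-degenerate directions is exactly what drives $\log\mathrm{Det}$ downhill. To accommodate the arbitrary positive constant $C$ in the statement, I would repeat the computation in the proof of Theorem \ref{thm:2} with $\partial f/\partial\lambda_i = 1/\lambda_i$ and the constant covariance $C_{i,i}=C$; this gives $\langle\theta_i^2\rangle = \eta C/(2\lambda_i) + O(\eta^2)$ and hence $\langle[\![\log\mathrm{Det}_{t+1}-\log\mathrm{Det}_t]\!]\rangle \approx -\tfrac{\eta^2 C}{4}\Vert\hat{\nabla}\log\mathrm{Det}\Vert^2 \leq 0$, using $\hat{\nabla}\log\mathrm{Det} = \sum_i\hat{\nabla}\lambda_i/\lambda_i$ and $C>0$. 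Equivalently, one may apply Theorem \ref{thm:2} verbatim to $Cf$, whose associated covariance is $C_{i,i}=C$, and then divide the conclusion by $C$. Equality occurs precisely when $\hat{\nabla}\log\mathrm{Det}=0$.

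The remaining step, which is the real content beyond Theorem \ref{thm:2}, is to pass from the decrease of $\log\mathrm{Det}$ to the decrease of $\mathrm{Det}$. Writing $\mathrm{Det}_t = \exp(\log\mathrm{Det}_t)$ and expanding, $\mathrm{Det}_{t+1}-\mathrm{Det}_t = \mathrm{Det}_t\big(\log\mathrm{Det}_{t+1}-\log\mathrm{Det}_t\big) + \tfrac12\mathrm{Det}_t\big(\log\mathrm{Det}_{t+1}-\log\mathrm{Det}_t\big)^2 + \cdots$. Here I would invoke the timescale-separation structure already built into the setup: the slow update is $\hat{\bm{\theta}}_{t+1}-\hat{\bm{\theta}}_t = -\eta\hat{\nabla}L$, and because $\hat{\nabla}L = \tfrac12\sum_j\theta_j^2\hat{\nabla}\lambda_j$ is controlled by $\langle\theta_j^2\rangle = O(\eta)$, each step moves $\hat{\bm{\theta}}$ — and hence $\log\mathrm{Det}$ — by only $O(\eta^2)$. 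Consequently the quadratic and higher terms in the expansion are $O(\eta^4)$ and drop out at the working order, leaving $\langle[\![\mathrm{Det}_{t+1}-\mathrm{Det}_t]\!]\rangle = \mathrm{Det}\,\langle[\![\log\mathrm{Det}_{t+1}-\log\mathrm{Det}_t]\!]\rangle + O(\eta^3)$, where the positive prefactor $\mathrm{Det}=\prod_i\lambda_i>0$ factors out of the fast average as a function of the slow variables only. Combining with the previous step gives $\langle[\![\mathrm{Det}_{t+1}-\mathrm{Det}_t]\!]\rangle \leq 0$.

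I expect the main obstacle to be exactly this last transfer. Because $\log\mathrm{Det}$ is a \emph{concave} function of $\mathrm{Det}$, a naive Jensen-type argument would push the inequality in the wrong direction, and the quadratic correction in the exponential expansion is a priori of the same order in the noise as the leading drift. The resolution hinges on the observation that the degenerate-direction drift is second order in $\eta$ rather than first, so that $\log\mathrm{Det}_{t+1}-\log\mathrm{Det}_t = O(\eta^2)$ per step and its square is genuinely negligible; I would take care to state this order-counting explicitly, since it is what guarantees that the sign of the determinant change is inherited intact from that of $\log\mathrm{Det}$.
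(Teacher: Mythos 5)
Your proposal is correct and follows the route the paper intends: the corollary is stated without proof as a direct specialization of Theorem \ref{thm:2} with $f(\bm{\lambda})=\log\prod_{i=1}^n\lambda_i$, for which $\partial f/\partial\lambda_i=1/\lambda_i>0$ yields the constant covariance $C_{i,i}=1$, rescaled to an arbitrary $C>0$ exactly as you do. Your explicit order-counting argument for passing from the decrease of $\log\mathrm{Det}$ to the decrease of $\mathrm{Det}$ itself --- noting that the per-step drift of $\hat{\bm{\theta}}$ is $O(\eta^2)$ so the quadratic term in the exponential expansion is $O(\eta^4)$ and the positive prefactor $\mathrm{Det}(\hat{\bm{\theta}})$, depending only on the slow variables, preserves the sign --- correctly fills in the one step (the potential Jensen gap) that the paper leaves entirely implicit.
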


Corollary \ref{cl:detDecrease} indicates that the non-degenerate determinant will decrease on average during optimization if we introduce isotropic noise in the non-degenerate space.

It is known \citep{bayesianSmith} that the Bayesian evidence, minimization of which minimizes a PAC-Bayes generalization bound, has a contribution from the non-degenerate determinant. This contribution is also called the “Occam factor”. Our result thus leads to a new algorithm where one should introduce isotropic noise into the non-degenerate space to decrease the  determinant and potentially improve generalization.

\section{Extended Analysis and Experiments}
\begin{figure*}[t]
\vskip 0.1in
\begin{center}
\centerline{\includegraphics[width=0.9\textwidth]{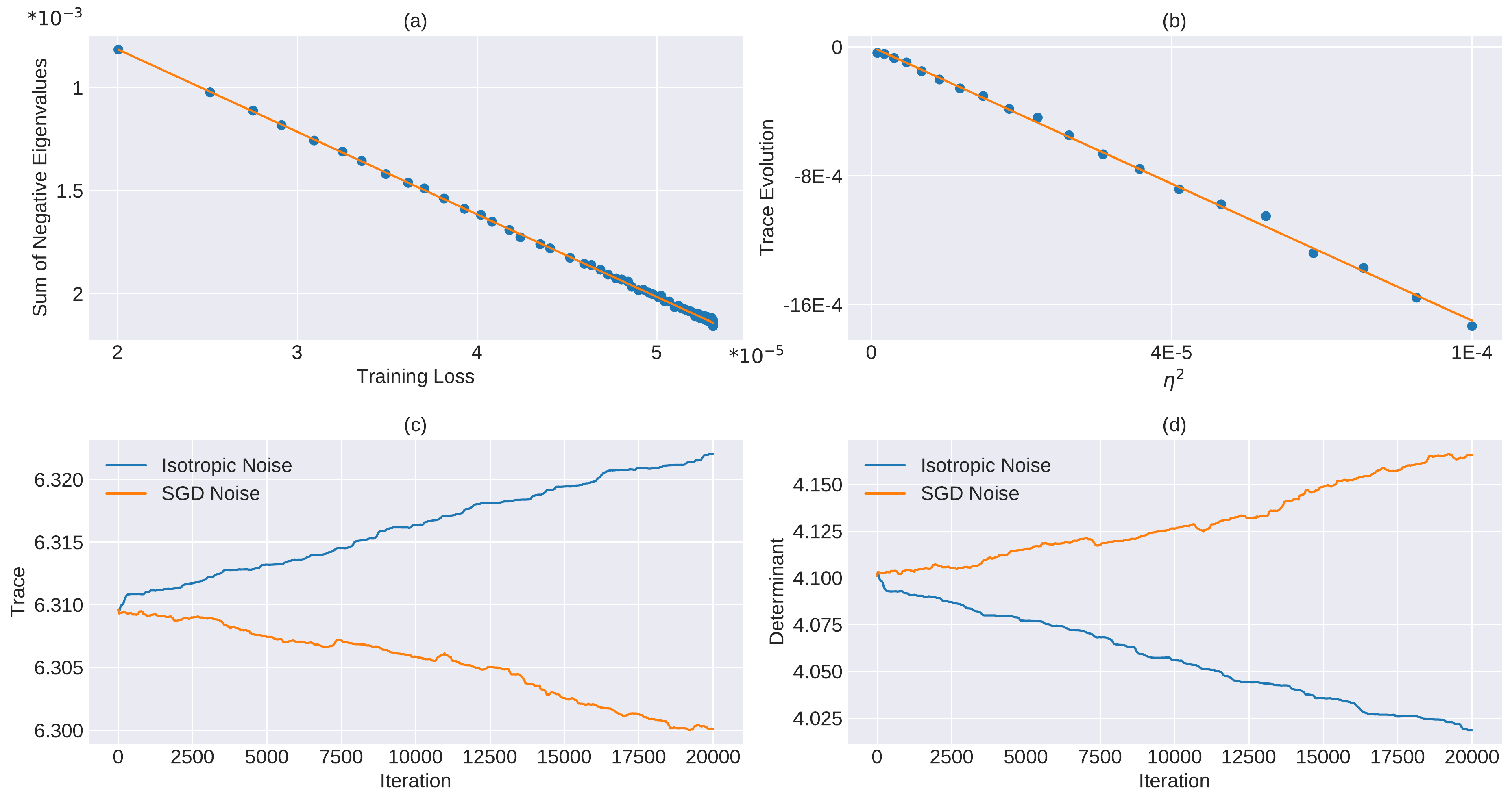}}
\caption{(a) The training loss  v.s. the sum of negative eigenvalues. Red is the best fit line, $y = wx+b$ with $w =-40.07$ and $b=-1.22*10^{-5}$. The data is nearly a straight line and the y-intercept is almost zero, consistent with our prediction in Appendix \ref{sm:negatievVal}. (b) The trace evolution  v.s. the squared learning rate. The result shows a linear relation. Red is best fit line. (c)-(d) Change of the Hessian trace and determinant during training using isotropic (blue) and SGD noise (red).}
\label{fig:curveFitAndToyModel}
\end{center}
\vskip -0.15in
\end{figure*}
\subsection{Negative Eigenvalues of the Hessian}
\label{sec:negativeEig}
Empirically, small negative eigenvalues arise even with small training loss, which seems to indicate that the model is at a saddle point instead of a minimum \citep{empirical_sagun, density}. This however is consistent with the loss in Equation \ref{eq:lossfunction}. The loss is minimal when $\theta_{i} = 0$ for $i\leq n$. However, when we use a trained instance to estimate the eigenvalues of the Hessian, we only guarantee that $\theta_{i}$ is close to zero (and thus nonzero training loss) for $i\leq n$. Therefore, there could exist negative eigenvalues which originate from the second derivative of $\theta_{i}$  for $i>n$. They are originally zero at the minimum with $\theta_{i} = 0$ for $i\leq n$, and their magnitude is suppressed by $\theta_{i}^2$  for $i\leq n$ which is related to the small training loss.

In Appendix \ref{sm:negatievVal}, we also predict that the negative eigenvalues on average are proportional to the training loss. To confirm this, we set up a simple experiment to classify MNIST data with a small training set of $500$ randomly selected samples, with a single hidden layer fully-connected network with $6220$ parameters and loss given by label-smoothed cross entropy, again to make a genuine minimum. We first obtain a minimum with gradient descent optimization at learning rate $0.1$ and $10k$ training epochs and use this as initialization for later. The model is then further trained from this initialization with different batch sizes from $5$ to $250$ and learning rates from $0.001$ to $0.1$ in order to equilibrate to various training losses. Then the negative eigenvalues are calculated and the result is shown in Fig. \ref{fig:curveFitAndToyModel}(a). The result shows a linear relationship between training loss and the sum of negative eigenvalues, as predicted.

\subsection{The Evolution of Trace and Determinant with a Toy Model}
In this section, we use two toy models to test our previous theoretical analysis. In Section \ref{sec:mainTheorem}, we showed that the trace evolution rate is proportional to $\eta^2$. Notice that the two factors of $\eta$ have different origins. One is from the expansion of $\lambda$ and the gradient descent step as in Eq. \ref{eq:lt1-lt}. The other contribution is the equilibrium statistics in the non-degenerate directions as in Eq. \ref{eq:t=c/l}. Furthermore, the coefficient of the proportionality relation also depends on $\hat{\nabla}\lambda$. Therefore to test this prediction for how learning rate affects the trace evolution, we train a model multiple times with different $\eta$s. When doing this, the initialization for each run is in equilibrium in the non-degenerate space, and we choose a loss function for which $\hat{\nabla}\lambda$ is constant. We design a simple four-dimensional toy model with the loss $L(\bm{\theta}) = |\theta_3+\theta_4|\theta_1^2+|\theta_3+2\theta_4|\theta_2^2$ to test this effect. We initialize the model to make sure that $|\theta_3+\theta_4|$ and $|\theta_3+2\theta_4|$ don't change sign during a few iterations of training. For each $\eta$, we first train for $1000$ iterations with SGD noise to equilibrate the model and calculate the trace. Then the model is trained for another $1000$ iterations and we calculate the trace evolution. The result is shown in Fig \ref{fig:curveFitAndToyModel}(b), which shows a linear relation as predicted.

Next, we measure the evolution of the Hessian trace and determinant after introducing noise with different covariance structure. Our theory predicts that SGD noise decreases the trace while isotropic noise in the non-degenerate subspace decreases the determinant. To test this, we design a toy model where the behavior of these two quantities is anti-correlated. The loss can be written as $L(\bm{\theta}) = (4-2e^{-\theta_3^2-\theta_4^2})\theta_1^2+e^{-(\theta_3-\theta_4)^2}\theta_2^2$. We train the model with the same initialization and with the two different noise structures. The result is shown in Fig. \ref{fig:curveFitAndToyModel}(c)-(d), consistent with our analysis.

\subsection{Trace Evolution along Projected Path}
In Section \ref{sec:mainTheorem}, we presented two experiments to support the existence of a minimal valley. Recall that the projected paths follow along the bottom of the valley associated with the SGD dynamics. Our theory predicts that the Hessian trace along the projected paths should decrease during optimization. To confirm this, we use method in \cite{traceBai} to estimate the Hessian trace and the results are shown in Figure \ref{fig:traceEstimate}, in which the values and error bars are calculated by the mean and standard deviation of $10$ different initializations. As seen in the figure, the Hessian trace indeed decreases on average in both cases, which is consistent with our theoretical prediction. Notice that test performance does not monotonically improve during optimization. Indeed the trace continues decreasing slightly even when validation loss goes up. This indicates that the trace itself is not sufficient to describe generalization \citep{pacBNeyshabur}.

\begin{figure*}[t]
\vskip 0.1in
\begin{center}
\centerline{\includegraphics[width=0.9\textwidth]{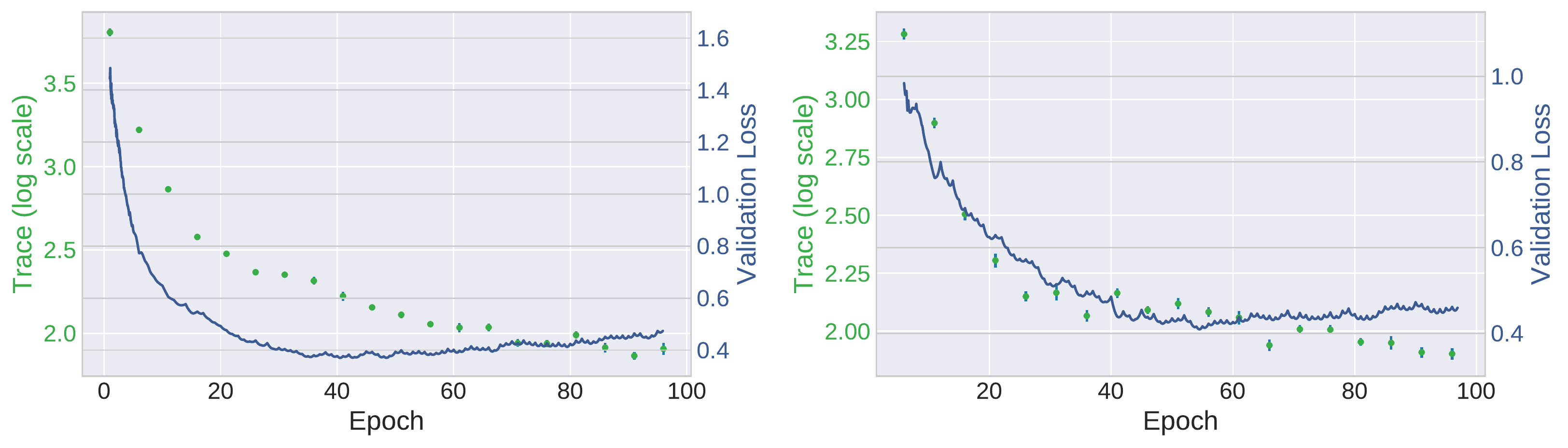}}
\caption{Hessian Trace estimation of projected paths found in Section \ref{sec:mainTheorem} with network architecture of Preact-ResNet18 (left) and VGG16 (right). Red bars represent the errors in trace estimation. The decreasing trace during SGD optimization confirms our theory prediction.}
\label{fig:traceEstimate}
\end{center}
\vskip -0.1in
\end{figure*}

\section{Conclusion}
In this paper, we showed that within a minimal valley, the trace of the loss function Hessian decreases on average. We also demonstrated how to design the covariance structure of added noise during optimization to affect other properties of the Hessian's evolution, opening the door to new algorithms. The analysis in this paper can be extended and generalized to models trained on much larger date sets.

\subsubsection*{Acknowledgments}
We thank Boris Hanin, Sho Yaida, and Dan Roberts for valuable discussions. This work was partially supported by the National Science Foundation through the
Center for the Physics of Biological Function (PHY-1734030), by NIH grant 5R01EB026943, and by a Simons Foundation fellowship for the MMLS (DJS).

\bibliography{iclr2020_conference}
\bibliographystyle{iclr2020_conference}

\appendix
\section{Appendix}
\subsection{Proof of Lemma \ref{lemma:lossFunction}}\label{appendix:proofLemma}

To study the dynamics of SGD, we investigate the model behavior starting from $\bm{\theta}_{t}$ to $\bm{\theta}_{t+1}$ by one-step training and assume that $\bm{\theta}_{t+1}$ is close to $\bm{\theta}_{t}$. In this subsection, we derive the property of the loss function in a small neighbourhood, $B(\bm{\theta}_{t})$, of $\bm{\theta}_{t}$, which includes $\bm{\theta}_{t+1}$. We consider the model in a minimal valley with a set $\mathcal{M}$ of minima and define a projection $\mathcal{P}$ that maps any point $\bm{\theta}$ in $B(\bm{\theta}_{t})$ to the minimum in $\mathcal{M}$ that is closest to $\bm{\theta}$, i.e.
\begin{equation}
    \mathcal{P}(\bm{\theta})\equiv \underset{\bm{\theta^\ast}\in\mathcal{M}}{\mathrm{argmin}} \Vert\bm{\theta} - \bm{\theta^\ast}\Vert_2 \enspace . 
\end{equation}
Because $\mathcal{M}$ is connected by definition of minimal valley, we assume that $\mathcal{P}$ is also continuous in $B(\bm{\theta}_{t})$. We denote the dimension of whole parameter space and $\mathcal{M}$ be $N$ and $N-n$, respectively. We can then make a quadratic approximation to the loss function in $B(\bm{\theta}_{t})$, 
\begin{equation}\label{eq:quadraticA}
    L(\bm{\theta}) = L^\ast + \frac{1}{2}(\bm{\theta}-\mathcal{P}(\bm{\theta}))^\mathrm{T}H(\mathcal{P}(\bm{\theta}))(\bm{\theta}-\mathcal{P}(\bm{\theta}))\enspace, 
\end{equation}
where we expand the loss function at the minimum that is closest to $\bm{\theta}$ and $H(\mathcal{P}(\bm{\theta}))$ is the Hessian depending on the position $\mathcal{P}(\bm{\theta})$ in the valley. $L^\ast$ is a constant by definition of a minimal valley and will be ignored in the following derivations. 

We define a minimal path to be a smooth path in a minimal valley that every point on the path is a minimum. For any path passing through a minimum $\theta^\ast$, we have the tangent line of the minimal path at $\theta^\ast$ being in the null space of the Hessian at $\theta^\ast$ by definition. Precisely, for an arbitrary minimal path $\bm{\theta}^\ast(\mu)\colon [0, 1] \to \mathcal{M}$, we have

\begin{equation}\label{eq:tangent}
    H(\bm{\theta}^\ast(\mu)) \frac{d \bm{\theta}^\ast(\mu)}{d u} = 0 \enspace . 
\end{equation}

Next we have the following lemma,

\begin{lemma}\label{theorem:HJ}
Let $J(\bm{\theta})$ be the $N\times N$ Jacobian matrix of function $\mathcal{P}$ at $\bm{\theta}$ defined as $J_{ij} = \frac{d \mathcal{P}(\bm{\theta})_i}{d \theta_j}$. We have
\begin{equation}\label{eq:HJ=0}
    H(\mathcal{P}(\bm{\theta}))J(\bm{\theta}) = 0 \enspace . 
\end{equation}
\end{lemma}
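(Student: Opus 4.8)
The plan is to show that each column of the Jacobian $J(\bm{\theta})$ is a tangent vector to the minimum manifold $\mathcal{M}$ at the projected point $\mathcal{P}(\bm{\theta})$, and then to invoke the fact already recorded in Equation \ref{eq:tangent}, namely that every such tangent vector lies in the null space of $H(\mathcal{P}(\bm{\theta}))$. Since $H(\mathcal{P}(\bm{\theta}))\,J(\bm{\theta})$ is the matrix whose columns are $H(\mathcal{P}(\bm{\theta}))$ applied to the columns of $J(\bm{\theta})$, establishing that each column is annihilated immediately yields $H(\mathcal{P}(\bm{\theta}))\,J(\bm{\theta}) = 0$.

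Concretely, I would fix a coordinate direction $j \in \{1, \dots, N\}$ and consider the straight line $\bm{\theta}(s) = \bm{\theta} + s\,\bm{e}_j$ through $\bm{\theta}$, where $\bm{e}_j$ is the $j$-th standard basis vector. Its image $\bm{\theta}^\ast(s) \equiv \mathcal{P}(\bm{\theta}(s))$ is a curve lying entirely in $\mathcal{M}$, so by definition every point on it is a minimum; it is therefore a minimal path in the sense defined just above the lemma. Differentiating at $s=0$ gives, by the chain rule, $\frac{d}{ds}\bm{\theta}^\ast(s)\big|_{s=0} = J(\bm{\theta})\,\bm{e}_j$, which is precisely the $j$-th column of $J(\bm{\theta})$, while $\bm{\theta}^\ast(0) = \mathcal{P}(\bm{\theta})$. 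Applying Equation \ref{eq:tangent} to this minimal path at the parameter value corresponding to $s=0$ then yields $H(\mathcal{P}(\bm{\theta}))\,J(\bm{\theta})\,\bm{e}_j = 0$. Letting $j$ range over all $N$ directions shows that every column of $J(\bm{\theta})$ is killed by $H(\mathcal{P}(\bm{\theta}))$, which is the claim.

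The main obstacle is regularity rather than algebra: the argument differentiates the projection $\mathcal{P}$, so it presupposes that $\mathcal{P}$ is not merely continuous (as assumed in the construction) but $C^1$ on the neighborhood $B(\bm{\theta}_{t})$, so that $J$ exists and the composed curve $s \mapsto \mathcal{P}(\bm{\theta} + s\,\bm{e}_j)$ is smooth enough to qualify as a minimal path to which Equation \ref{eq:tangent} applies. I would handle this by assuming, consistent with the smooth minimal-valley picture, that $\mathcal{M}$ is a $C^1$ submanifold of dimension $N-n$ and that the nearest-point projection is $C^1$ on $B(\bm{\theta}_{t})$, which holds locally whenever $\mathcal{M}$ is smooth and $\bm{\theta}$ is close enough to $\mathcal{M}$ for the closest point to be unique. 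A cleaner reformulation that sidesteps checking each coordinate is to observe that, because $\mathcal{P}$ takes all its values in $\mathcal{M}$, the range of the linear map $J(\bm{\theta})$ is contained in the tangent space $T_{\mathcal{P}(\bm{\theta})}\mathcal{M}$; by Equation \ref{eq:tangent} that tangent space is contained in $\ker H(\mathcal{P}(\bm{\theta}))$, and composing the two inclusions gives $H(\mathcal{P}(\bm{\theta}))\,J(\bm{\theta}) = 0$ directly. I would present the explicit coordinate computation above as the concrete realization of this containment.
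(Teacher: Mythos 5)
Your proof is correct and follows essentially the same route as the paper: the paper likewise defines the minimal path $\bm{\theta}^\ast(\mu) \equiv \mathcal{P}(\bm{\theta} + \mu\,\Delta\bm{\theta})$ for an arbitrary direction $\Delta\bm{\theta}$, differentiates to get $J(\bm{\theta})\Delta\bm{\theta}$ as the tangent vector, and applies Equation \ref{eq:tangent}; your version with $\Delta\bm{\theta} = \bm{e}_j$ is the same argument column by column. Your added remarks on the $C^1$ regularity of $\mathcal{P}$ make explicit a hypothesis the paper leaves implicit, which is a worthwhile clarification but not a different proof.
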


\begin{proof}
Notice that $\bm{\theta}^\ast(\mu) \equiv \mathcal{P}(\bm{\theta} + \mu \Delta\bm{\theta})$ forms a minimal path through $\bm{\theta}$ where $\mu \in [0, 1]$ and $\Delta\bm{\theta}$ is an arbitrary direction in the parameter space. We have
\begin{equation}
    \frac{d \bm{\theta}^\ast(\mu)}{d u}\Big|_{\mu=0} = J(\bm{\theta})\Delta\bm{\theta} \enspace . 
\end{equation}
By Eq. \ref{eq:tangent}, we have
\begin{equation}
    H(\mathcal{P}(\bm{\theta}))J(\bm{\theta})\Delta\bm{\theta}= 0 \enspace . 
\end{equation}
Since $\Delta{\bm{\theta}}$ is arbitrary, we have Eq. \ref{eq:HJ=0}.
\end{proof}

Next we consider the diagonalization of the Hessian as
\begin{equation}
    H(\mathcal{P}(\bm{\theta})) = V(\mathcal{P}(\bm{\theta}))^\mathrm{T}S(\mathcal{P}(\bm{\theta}))V(\mathcal{P}(\bm{\theta}))\enspace, 
\end{equation}
where $S(\mathcal{P}(\bm{\theta})) = diag(\lambda_1(\mathcal{P}(\bm{\theta})), ..., \lambda_n(\mathcal{P}(\bm{\theta})), 0, ..., 0)$ and $\lambda_i > 0$ for $1\leq i \leq n$. As suggested by \cite{gradientGuy}, we assume that the eigenspace change is negligible in the small neighbourhood $B(\bm{\theta}_t)$ and denote constant orthogonal matrix $V\equiv V(\mathcal{P}(\bm{\theta}_t))\approx V(\mathcal{P}(\bm{\theta}))$. 

We then perform the coordinate transformation 
\begin{equation}\label{eq:tTransform}
 \bm{\phi} = V\bm{\theta}\enspace,
\end{equation}
and the loss function becomes
\begin{equation}
     L(\bm{\phi}) = \frac{1}{2}\sum_{i=1}^n(\phi_i-\Tilde{\mathcal{P}}(\bm{\phi})_i)^2\lambda_i(\Tilde{\mathcal{P}}(\bm{\phi}))\enspace ,
\end{equation}
where $\Tilde{\mathcal{P}}(\bm{\phi})$ is the same projection map as $\mathcal{P}(\bm{\theta})$ but in the coordinate of $\bm{\phi}$. Notice here that $\Tilde{\mathcal{P}}(\bm{\phi})$ is the projection to the minimum with the least L-2 distance to $\bm{\phi}$, because the coordinate transformation is orthogonal and distance preserved. We have
\begin{equation}\label{eq:pTransform}
    \Tilde{\mathcal{P}}(\bm{\phi}) = V\mathcal{P}(\bm{\theta})\enspace.
\end{equation}
We have the following theorem,
\begin{proposition}
$\Tilde{\mathcal{P}}(\bm{\phi})_i$ is constant for $i\leq n$.
\end{proposition}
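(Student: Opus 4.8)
The plan is to transport the identity $H(\mathcal{P}(\bm{\theta}))J(\bm{\theta}) = 0$ from Lemma \ref{theorem:HJ} into the rotated $\bm{\phi}$-coordinates and then read off that the Jacobian of $\Tilde{\mathcal{P}}$ has vanishing top $n$ rows. First I would rewrite the transformed projection explicitly: since $\bm{\phi} = V\bm{\theta}$ with $V$ orthogonal we have $\bm{\theta} = V^\mathrm{T}\bm{\phi}$, so Eq. \ref{eq:pTransform} reads $\Tilde{\mathcal{P}}(\bm{\phi}) = V\mathcal{P}(V^\mathrm{T}\bm{\phi})$. Differentiating this with respect to $\bm{\phi}$ by the chain rule, and using the standing assumption that $V$ is constant on $B(\bm{\theta}_t)$ so that only $\mathcal{P}$ is differentiated, gives for the Jacobian $\Tilde{J}$ of $\Tilde{\mathcal{P}}$ the conjugation relation $\Tilde{J} = VJV^\mathrm{T}$, where $J$ is the Jacobian of $\mathcal{P}$ evaluated at $\bm{\theta}=V^\mathrm{T}\bm{\phi}$.

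Next I would substitute the diagonalization $H(\mathcal{P}(\bm{\theta})) = V^\mathrm{T}SV$ into Lemma \ref{theorem:HJ}, obtaining $V^\mathrm{T}SVJ = 0$. Left-multiplying by $V$ and right-multiplying by $V^\mathrm{T}$, and using $VV^\mathrm{T} = I$, this collapses to $S(VJV^\mathrm{T}) = S\Tilde{J} = 0$. Because $S = \mathrm{diag}(\lambda_1, \dots, \lambda_n, 0, \dots, 0)$ with $\lambda_i > 0$ for $i \leq n$, the diagonal product $S\Tilde{J} = 0$ yields $\lambda_i \Tilde{J}_{ij} = 0$ for every $i \leq n$ and all $j$, hence $\Tilde{J}_{ij} = \partial \Tilde{\mathcal{P}}(\bm{\phi})_i / \partial \phi_j = 0$ for all $i \leq n$. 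Since every partial derivative of $\Tilde{\mathcal{P}}(\bm{\phi})_i$ vanishes throughout the connected neighborhood, $\Tilde{\mathcal{P}}(\bm{\phi})_i$ is constant for $i \leq n$, which is the claim. (These constants are exactly what the translation $\bm{v}$ in Lemma \ref{lemma:lossFunction} absorbs to produce the canonical form in Eq. \ref{eq:lossfunction}.)

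The computation is short, so the main obstacle is bookkeeping rather than any deep idea: one must keep the orthogonal factors $V$ and $V^\mathrm{T}$ on the correct sides both in the chain-rule step $\Tilde{J} = VJV^\mathrm{T}$ and in the conjugation of $H$, and one must be careful to invoke the constancy-of-$V$ assumption so that differentiating $\Tilde{\mathcal{P}}$ does not generate spurious terms from $V(\mathcal{P}(\bm{\phi}))$. The only other point worth flagging is that the entire argument is local and implicitly uses the continuity and differentiability of $\mathcal{P}$ already assumed on $B(\bm{\theta}_t)$, so that $\Tilde{J}$ is well defined and the vanishing of its rows genuinely forces constancy of the corresponding components.
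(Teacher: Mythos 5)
Your proposal is correct and follows essentially the same route as the paper's own proof: transport Lemma~\ref{theorem:HJ} into the $\bm{\phi}$-coordinates via $\Tilde{J} = VJV^\mathrm{T}$, use the diagonalization of $H$ to reduce it to $S\Tilde{J}=0$, and conclude that the top $n$ rows of $\Tilde{J}$ vanish because the corresponding $\lambda_i$ are positive. The only difference is that you spell out the chain-rule derivation of $\Tilde{J}=VJV^\mathrm{T}$ and the conjugation algebra, which the paper states without detail.
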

\begin{proof}
In $\phi$-coordinate, by Eq. \ref{eq:tTransform} and \ref{eq:pTransform} the Jacobian of $\Tilde{\mathcal{P}}(\bm{\phi})$, $\Tilde{J}(\bm{\phi})_{ij} \equiv \frac{d \Tilde{\mathcal{P}}(\bm{\phi})_i}{d\phi_j}$, is related to $J(\bm{\theta})$ as
\begin{equation}
    \Tilde{J}(\bm{\phi}) = VJ(\bm{\theta})V^\mathrm{T} \enspace.
\end{equation}
Therefore Lemma \ref{theorem:HJ} becomes
\begin{equation}\label{eq:SJ}
    S(\mathcal{P}(\bm{\theta}))\Tilde{J}(\bm{\phi}) = 0\enspace.
\end{equation}
Notice that $[S(\mathcal{P}(\bm{\theta}))\Tilde{J}(\bm{\phi})]_{ij} = \sum_{k=1}^{n}\lambda_k\delta_{ik}\Tilde{J}(\bm{\phi})_{kj}$ where $\delta_{ik}$ is the Kronecker delta function, we have
\begin{equation}
    [S(\mathcal{P}(\bm{\theta}))\Tilde{J}(\bm{\phi})]_{ij} = \lambda_i \Tilde{J}(\bm{\phi})_{ij} \enspace, \enspace i\leq n \enspace .
\end{equation}
From Eq. \ref{eq:SJ} and $\lambda_i > 0$, we have
\begin{equation}
    \Tilde{J}(\bm{\phi})_{ij} = 0 \enspace, \enspace i\leq n \enspace .
\end{equation}
Notice that $\Tilde{J}(\bm{\phi})_{ij} \equiv \frac{d \Tilde{\mathcal{P}}(\bm{\phi})_i}{d\phi_j} = 0$ for $i\leq n$ and all $j$, we have that $\Tilde{\mathcal{P}}(\bm{\phi})_i$ is constant for $i\leq n$.
\end{proof}
We denote $\Tilde{\mathcal{P}}(\bm{\phi})_i$ as $\phi^\ast_i$ for $i\leq n$. Therefore we can define a global translation vector $\bm{\phi}^\ast = (\phi^\ast_1, ..., \phi^\ast_n, 0, ..., 0)^\mathrm{T}$ and consider a new set of coordinate $\bm{\psi} = \bm{\phi}-\bm{\phi}^\ast$ and have
\begin{equation}\label{eq:lossPsi}
     L(\bm{\psi}) = \frac{1}{2}\sum_{i=1}^n\psi_i^2\lambda_i(\Bar{\mathcal{P}}(\bm{\psi}))\enspace ,
\end{equation}
where $\Bar{\mathcal{P}}(\bm{\psi})$ is also the projection to the minimum with the least L-2 distance to $\bm{\psi}$.

Clearly we have the set of minima of the loss function \ref{eq:lossPsi} as $\{\bm{\psi}|\psi_i = 0, i \leq n\}$ and the closest minimum to $\bm{\psi}$ in the L-2 distance is $\bm{\psi}^\ast = (0, ..., 0, \psi_{n+1}, ..., \psi_N)^\mathrm{T}$. Therefore we have
\begin{equation}
    \Bar{\mathcal{P}}(\bm{\psi}) = (0, ..., 0, \psi_{n+1}, ..., \psi_N)^\mathrm{T}\enspace,
\end{equation}
and
\begin{equation}\label{eq:finalLoss}
     L(\bm{\psi}) = \frac{1}{2}\sum_{i=1}^n\psi_i^2\lambda_i(\psi_{n+1}, ..., \psi_N)\enspace ,
\end{equation}

Notice that the coordinate transformations from $\bm{\theta}$ to $\bm{\psi}$ only involve orthogonal rotation and translation, the SGD rule is conservative. To see this, we first have
\begin{equation}
    \nabla_{\bm{\theta}} = V^\mathrm{T}\nabla_{\bm{\phi}} = V^\mathrm{T}\nabla_{\bm{\psi}} \enspace .
\end{equation}
And the SGD on $\bm{\theta}$ becomes
\begin{equation}
\begin{aligned}
    &\enspace\enspace\enspace\bm{\theta}_{t+1} = \bm{\theta}_t - \eta V^\mathrm{T}\nabla_{\bm{\psi}} L(\bm{\psi}) \\
    &\Rightarrow \bm{\phi}_{t+1} = \bm{\phi}_t - \eta \nabla_{\bm{\psi}} L(\bm{\psi})\\
    &\Rightarrow \bm{\psi}_{t+1} = \bm{\psi}_t - \eta \nabla_{\bm{\psi}} L(\bm{\psi}) \enspace.
\end{aligned}
\end{equation}

In the main text, we replace the notation $\bm{\psi}$ with $\bm{\theta}$ for symbol simplicity.

\subsection{Additional Experiments}\label{sm:ae}
To further support our assumption of connected minima, we conducted extra experiments including DenseNet \citep{densenet} on CIFAR100, and explored the landscape by introducing randomized labels on a subset of the training data to initialize subsequent training on purely clean data (details in Section \ref{sec:as}).

\subsubsection{DenseNet on CIFAR100\label{sec:densenet}}
We trained a DenseNet (number of blocks = [6,12,24,16] and growth rate=12) on CIFAR100 and followed the same pipeline as described in the main text experiments. The result is shown in Figure \ref{fig:extraExperiment}(a) which supports the connectedness assumption as well as the result of the Hessian trace decreasing under SGD.

\subsubsection{4-layer ConvNet on MNIST and ResNet18 on CIFAR10 with Randomized Labels\label{sec:as}}
The setup of these experiments is as follows: we randomly split the training set into two sets, A and B. For MNIST and CIFAR10 we let the number of samples in A be $40000$. Then we shuffle the labels in set B. For the rest of this section, we call A the training set and B the shuffle set. The loss minimum, Hessian trace, etc. are all with respective to the training set A.

First we will obtain minima with worse generalization than the minima found by regular SGD. To do so, we train the model with the union of the training set A and shuffle set B until convergence. Then we further train the model with only A using gradient descent so the model reaches a minimum. This minimum corresponds to the left end point of the path in Figure \ref{fig:extraExperiment} (b) and (c).

We then train the model from this minimum with training set A alone using SGD. We find that the model starts to diffuse and validation loss becomes smaller even though it is at minimum already. Using the same method described in main text, we project this path to minimal and the results are shown in Figure \ref{fig:extraExperiment} (b) and (c). 

In both cases, we found that minima found by SGD are connected, and we further estimated the trace along both paths and found that it is decreasing on average. These results support the connectedness and degeneracy assumption and our result of SGD decreasing Hessian on average. It also illustrates how SGD helps model explore the loss landscape and escape bad minima.

\begin{figure*}[t]
\vskip 0.1in
\begin{center}
\centerline{\includegraphics[width=\textwidth]{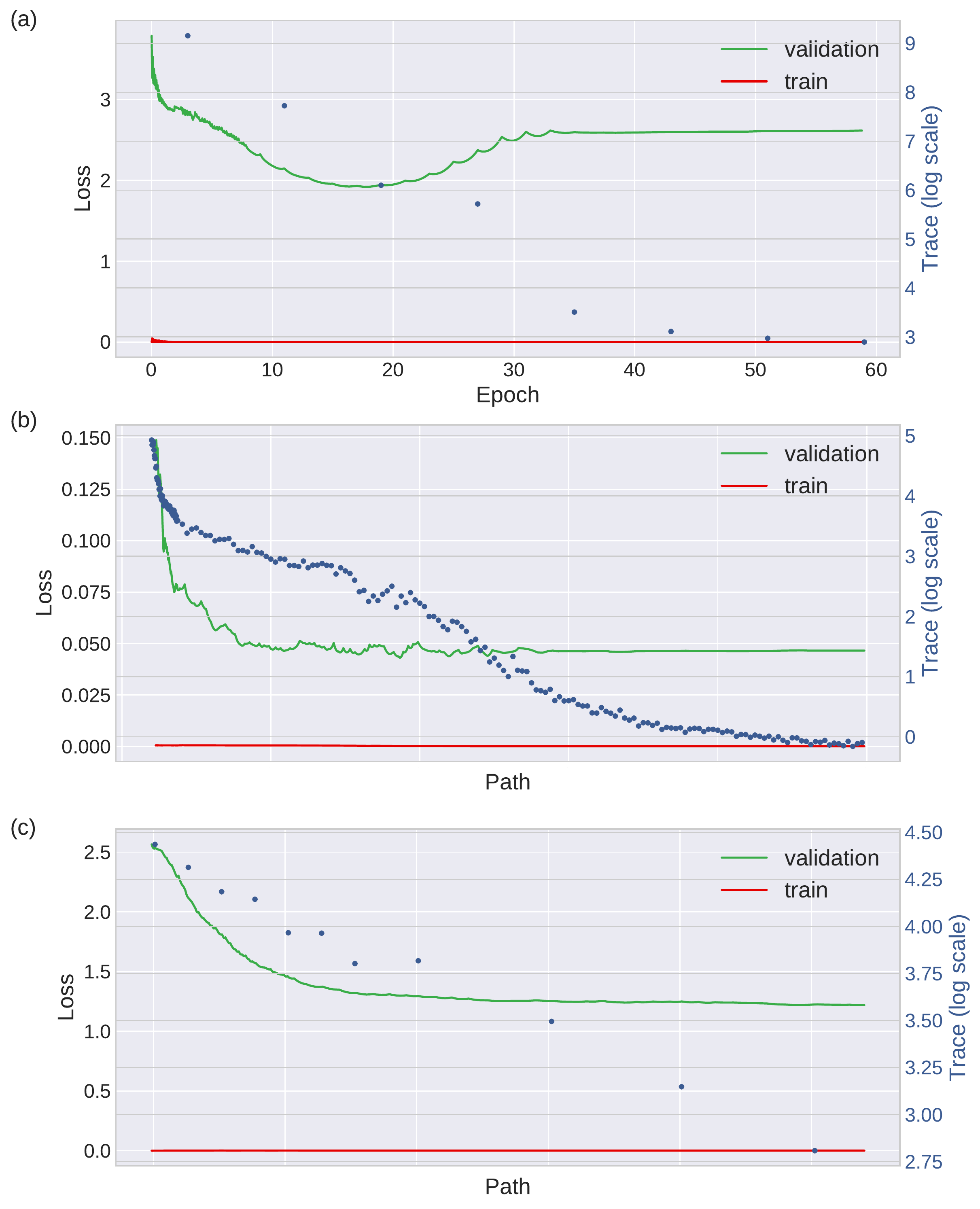}}
\caption{(a) Experiments performed as in the main text with DenseNet on CIFAR100. (b) Experiments performed as described in Section 1.2 using 4-layer ConvNet on MNIST  and (c) Preact-ResNet18 on CIFAR10.}
\label{fig:extraExperiment}
\end{center}
\vskip -0.1in
\end{figure*}

\subsection{The Sum of Negative Eigenvalues}\label{sm:negatievVal}
In fact, if we consider an equilibrium of $\theta_i$ for $i\leq n$, we have $\langle L\rangle = L^\ast + C\sum_{i=1}^{n}\lambda_i$ and $\langle \partial_j\partial_k L\rangle = C\sum_{i=1}^{n}\partial_j\partial_k\lambda_i$ for $j, k >n$ from Eq. \ref{eq:theta2}, where $C = \frac{\eta}{4S}(1-\frac{S}{M})$. Thus as we change learning rate or batch size, and hence $C$, $\langle \partial_j\partial_k L\rangle \propto \langle L\rangle - L^\ast$ for states nearby in the minimal valley because both are proportional to $C$ and $\sum_{i=1}^{n}\lambda_i$ and $\partial_j\partial_k\lambda_i$ are approximately constant for states nearby in the minimal valley. Usually in deep neural networks training loss can go to nearly zero with a large number of training iterations and small learning rate \citep{rethinking_Zhang, global_Du}, so we ignore $L^\ast$ and have that $\langle \partial_j\partial_k L\rangle \propto \langle L\rangle$. This result is tested and verified in Figure \ref{fig:curveFitAndToyModel}(a).
\end{document}